\mathchardef\mhyphen="2D
\newcommand{\href}[2]{{#2}}
\newcommand{\Sec}[1]{\hyperref[sec:#1]{\S\ref*{sec:#1}}} 
\newcommand{\App}[1]{\hyperref[sec:#1]{Appendix~\ref*{sec:#1}}} 
\newcommand{\Eqn}[1]{\hyperref[eq:#1]{{\rm (\ref*{eq:#1})}}} 
\newcommand{\Part}[1]{\hyperref[part:#1]{(\ref*{part:#1})}} 
\newcommand{\Fig}[1]{\hyperref[fig:#1]{Figure~\ref*{fig:#1}}} 
\newcommand{\Tab}[1]{\hyperref[tab:#1]{Table~\ref*{tab:#1}}} 
\newcommand{\Thm}[1]{\hyperref[thm:#1]{Theorem~\ref*{thm:#1}}} 
\newcommand{\Lem}[1]{\hyperref[lem:#1]{Lemma~\ref*{lem:#1}}} 
\newcommand{\Prop}[1]{\hyperref[prop:#1]{Proposition~\ref*{prop:#1}}} 
\newcommand{\Cor}[1]{\hyperref[cor:#1]{Corollary~\ref*{cor:#1}}} 
\newcommand{\Def}[1]{\hyperref[def:#1]{Definition~\ref*{def:#1}}} 
\newcommand{\Alg}[1]{\hyperref[alg:#1]{Algorithm~\ref*{alg:#1}}} 
\newcommand{\Ex}[1]{\hyperref[ex:#1]{Example~\ref*{ex:#1}}} 
\newcommand{\As}[1]{\hyperref[as:#1]{Assumption~{\rm\ref*{as:#1}}}} 
\newcommand{\Reg}[1]{\hyperref[as:#1]{Condition~\ref*{reg:#1}}} 
\newcommand{\AlgLine}[2]{\hyperref[alg:#1]{line~\ref*{line:#2} of Algorithm~\ref*{alg:#1}}}
\newcommand{\AlgLines}[3]{\hyperref[alg:#1]{lines~\ref*{line:#2}--\ref*{line:#3} of Algorithm~\ref*{alg:#1}}}
\newcommand{\Sec}[1]{{\S\ref{sec:#1}}} 
\newcommand{\App}[1]{{Appendix~\ref{sec:#1}}} 
\newcommand{\Eqn}[1]{{(\ref{eq:#1})}} 
\newcommand{\Part}[1]{{(\ref{part:#1})}} 
\newcommand{\Fig}[1]{{Figure~\ref{fig:#1}}} 
\newcommand{\Tab}[1]{{Table~\ref{tab:#1}}} 
\newcommand{\Thm}[1]{{Theorem~\ref{thm:#1}}} 
\newcommand{\Lem}[1]{{Lemma~\ref{lem:#1}}} 
\newcommand{\Prop}[1]{{Proposition~\ref{prop:#1}}} 
\newcommand{\Cor}[1]{{Corollary~\ref{cor:#1}}} 
\newcommand{\Def}[1]{{Definition~\ref{def:#1}}} 
\newcommand{\Alg}[1]{{Algorithm~\ref{alg:#1}}} 
\newcommand{\Ex}[1]{{Example~\ref{ex:#1}}} 
\newcommand{\Reg}[1]{{R~\ref*{reg:#1}}} 
\newcommand{\Real}{\mathbb{R}}
\newcommand{\dom}{{\bf dom}\,}
\newcommand{\Tra}{^{\sf T}} 
\newcommand{\tr}{\operatorname{tr}} 
\newcommand{\V}[1]{{\bm{\mathbf{\MakeLowercase{#1}}}}} 
\newcommand{\Vtilde}[1]{{\bm{\tilde \mathbf{\MakeLowercase{#1}}}}} 
\newcommand{\M}[1]{{\bm{\mathbf{\MakeUppercase{#1}}}}} 
\newcommand{\Mhat}[1]{{\bm{\hat \mathbf{\MakeUppercase{#1}}}}} 
\newcommand{\Mtilde}[1]{{\bm{\tilde \mathbf{\MakeUppercase{#1}}}}} 
\begin{document}

\newtheorem{proposition}{Proposition}[]
\newcommand\relatedversion{}
\renewcommand\relatedversion{\thanks{The full version of the paper can be accessed at \protect\url{https://arxiv.org/abs/1902.09310}}} 

\title{Towards Tuning-Free Minimum-Volume \\ Nonnegative Matrix Factorization}

\author
{Duc Toan Nguyen \thanks{Texas Christian University, Fort Worth, TX, USA (duc.toan.nguyen@tcu.edu)} \and Eric C. Chi \thanks{Department of Statistics, Rice University, Houston, TX, USA (echi@rice.edu)} }
\date{}
\maketitle
\begin{abstract}
Nonnegative Matrix Factorization (NMF) is a versatile and powerful tool for discovering  latent structures in data matrices, with many variations proposed in the literature. Recently, Leplat et al.\@ (2019) introduced a minimum-volume NMF for the identifiable recovery of rank-deficient matrices in the presence of noise. The performance of their formulation, however, requires the selection of a tuning parameter whose optimal value depends on the unknown noise level. In this work, we propose an alternative formulation of minimum-volume NMF inspired by the square-root lasso and its tuning-free properties. Our formulation also requires the selection of a tuning parameter, but its optimal value does not depend on the noise level. To fit our NMF model, we propose a majorization-minimization (MM) algorithm that comes with global convergence guarantees. We show empirically that the optimal choice of our tuning parameter is insensitive to the noise level in the data. 
    
\end{abstract}


\section{Introduction}

Nonnegative matrix factorization (NMF) is a factorization model where both the original and factor matrices are nonnegative. A matrix $\M{A}$ is nonnegative, denoted $\M{A} \geq  \V{0}$, if all of its entries are nonnegative. In particular, given a matrix $\M{X} \in \mathbb{R}^{m\times n}$, $\M{X} \geq \V{0}$, and a target rank $r$ such that $0< r \ll \min(m, n)$, NMF aims to find a matrix factorization model 
\begin{eqnarray*}
 \M{X} & \approx & \M{W}\M{H},  
\end{eqnarray*}
where $\M{W} \in \mathbb{R}^{m\times r}$ and $\M{H} \in \mathbb{R}^{r\times n}$ are nonnegative factor matrices. In practice, many data may be represented as nonnegative matrices, for example hyperspectral images and text. In the former, NMF can be used to identify pure pixel types and their relative abundances in an image. In the latter, NMF can be used to learn clusters of topics from a collection of documents. See \cite{fu2019nonnegative} for the wide array of applications of NMF.

In a noiseless setting where there are nonnegative factor matrices $\M{W}$ and $\M{H}$ such that $\M{X} = \M{W}\M{H}$, Fu et al.\@ proved in \cite{Fu2015} that under suitable regularity conditions on $\M{W}$ and $\M{H}$, any solution to the following optimization problem will recover the true underlying factor matrices up to permutation and scaling ambiguities
\begin{equation}
\label{eq:minvol_noiseless}
\begin{aligned}
\underset{\M{W}, \M{H}}{\min} \quad & \det\left(\M{W}\Tra\M{W}\right)\\
\textrm{s.t.} \quad & \M{X} = \M{W}\M{H},\, \M{H} \geq 0 \text{ and } \V{1}\Tra \M{H}  = \V{1}\Tra.
\end{aligned}
\end{equation}
The objective function in (\ref{eq:minvol_noiseless}) is proportional to the volume of the convex hull of the columns of $\M{W}$. Thus, the above problem seeks a minimum-volume (min-vol) NMF of $\M{X}$. Similar results were shown in parallel work in \cite{Lin2015}.

Recently, Leplat et al.\@ \cite{Leplat2019} and Thanh et al.\@ \cite{thanh2021inertial} proposed natural extensions to handle the case where there is noise. Specifically, Leplat et al.\@ posed the following optimization problem for noisy min-vol NMF
\begin{equation}
\begin{aligned}
\min_{\M{W}, \M{H}} \quad & \lVert \M{X}-\M{W}\M{H} \rVert_{\text{F}}^2 + \lambda  \log \det (\M{W}\Tra\M{W}+\delta \M{I}),\\
\textrm{s.t.} \quad & \M{W}, \M{H} \geq 0 \text{ and } \V{1}\Tra \M{H}(:,j) \leq 1, \forall j,
\end{aligned}
 \label{eq:gillis_mv}
\end{equation}
where $\lVert \cdot \rVert_\text{F}$ is the Frobenius norm, $\M{H}(:,j)$ is the $j$th column of matrix $\M{H}$, and $\delta$ is a positive constant introduced to allow $\M{W}$ to be rank-deficient.
The positive tuning parameter $\lambda$ plays an important role in trading off data fit as quantified by the least squares data fidelity term $\lVert \M{X} - \M{W}\M{H} \rVert_{\text{F}}^2$ and model complexity as quantified by the log-determinant penalty term. For notational convenience, we denote $\V{\theta} = (\M{W},\M{H})$ and the constraint set
\begin{eqnarray*}
\mathcal{S} & = & \lbrace (\M{W},\M{H}) : \M{W}, \M{H} \geq \M{0} \text{ and } \textbf{1}\Tra \M{H}(:,j) \leq 1, \forall j \rbrace.
\end{eqnarray*}

The optimal values of $\M{W}$ and $\M{H}$ in \Eqn{gillis_mv} correspond to the penalized maximum likelihood estimator under the assumption that 
\begin{eqnarray}
    \M{X} & = & \M{W}\M{H} + \M{E}
\end{eqnarray}
and $\M{E}$ is a matrix whose entries are i.i.d.\@ samples from a Gaussian distribution with mean zero and variance $\sigma^2$. The log-determinant penalty term corresponds to a prior incentivizing the recovery of a low complexity factor matrix $\M{W}$ with respect to its volume.

To solve \Eqn{gillis_mv}, Leplat et al.\@  proposed a block coordinate descent algorithm based on the projected fast gradient method \cite{gillis2014successive}. They set the initial value of their algorithm $(\M{W}_0,\M{H}_0)$ with the solution to the successive nonnegative projection algorithm (SNPA) \cite{gillis2014successive}. They further proposed the following formula to choose $\lambda$:
\begin{equation}
\label{eq:lambda_original}
    \lambda =  \Tilde{\lambda} \dfrac{\lVert \M{X}-\M{W}_0\M{H}_0 \rVert_{\text{F}}^2}{\log \det(\M{W}_0\Tra \M{W}_0 + \delta \M{I})},
\end{equation}
where $\Tilde{\lambda}$ is recommended to be between $10^{-3}$ and 1. As we will see shortly in Section 3, the quality of the NMF solution, however, depends on the noise level quantified by the unknown variance $\sigma^2$. Choosing the best $\Tilde{\lambda}$ for an unknown noise level $\sigma$ is not straightforward.  Follow up work by Thanh et al.\@ \cite{thanh2021inertial} proposed a modestly different constraint set $\mathcal{S}$, and a significantly faster algorithm than the one by Leplat et al.\@, but their formulation still requires setting a tuning parameter whose optimal value depends on the noise level.


The issue of choosing the tuning parameter $\lambda$ in \Eqn{gillis_mv} motivates this work. Specifically, we present a new problem formulation for noisy min-vol NMF and a Majorization-Minimization (MM) algorithm to iteratively compute its solution.  We demonstrate that our new min-vol NMF method can accurately recover an underlying NMF in the presence of noise under a relatively wide range of tuning parameter values.

Before we continue, we briefly review the notation used in this paper. We denote scalars by lowercase letters ($c$), vectors by lowercase boldface letters ($\V{v}$), and matrices by uppercase boldface letters ($\M{X}$). We denote the transpose matrix for $\M{X}$ by  $\M{X}\Tra$ and the $j$th column of $\M{X}$ by $\M{X}(:,j)$. We denote the vector whose entries are all one by $\V{1}$. 
In the context of optimization problems, we denote the ground-truth matrix $\M{W}^{\ast}$ 
and the matrix estimated by an algorithm $\Mhat{W}$.

\section{Tuning parameter dependence on noise level}
The following experiment illustrates the relationship between an empirically optimal tuning parameter $\lambda$ and unknown noise variance $\sigma^2$ in \Eqn{gillis_mv}. This dependency motivates our new  NMF method.

We revisit an experiment from \cite{Leplat2019}. Data are generated from ground-truth matrices $\M{W}^{\ast}$ and $\M{H}^{\ast}$ where 
\begin{eqnarray*}
    \M{W}^{\ast} & = & \begin{pmatrix} 1 & 1 & 0 & 0 \\ 0 & 0 & 1 & 1 \\ 0 & 1 & 1 & 0 \\ 1 & 0 & 0 & 1 \end{pmatrix},
\end{eqnarray*}
and $\M{H}^{\ast}$ is a random $4 \times 500$ matrix whose rows are independently drawn from a Dirichlet distribution. Subsequently, we construct a noiseless data matrix $\M{X}^{\ast} = \M{W}^{\ast} \M{H}^{\ast}$. A noise matrix $\M{E}$, whose entries are i.i.d.\@ Uniform$[0, \sigma]$ is added to $\M{X}^{\ast}$, resulting in $\M{X} = \M{X}^{\ast} + \M{E}$. Following the recommendation in \cite{Leplat2019}, we set $\delta = 0.1$. 
\begin{figure}[H]
    \centering
    \begin{subfigure}{.5\columnwidth}
  \centering
  \includegraphics[width=.9\linewidth]{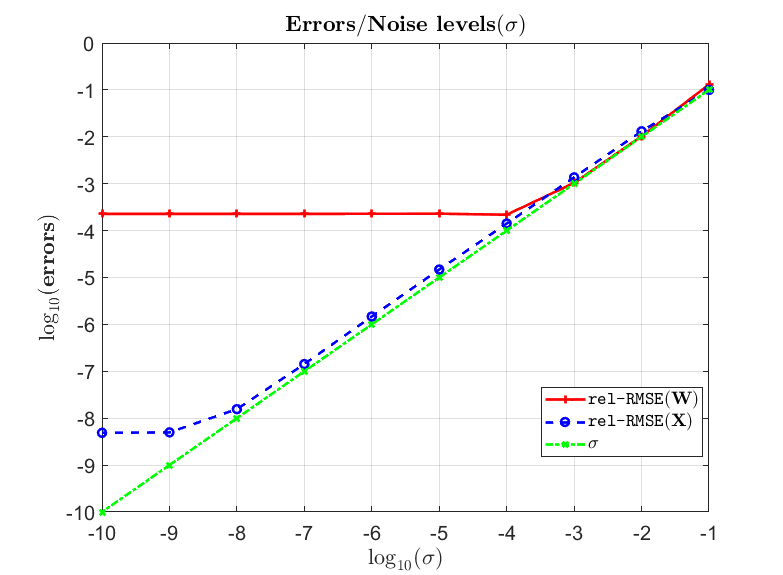}
  \caption{Smallest errors}
  \label{fig:sfig1}
\end{subfigure}%
\begin{subfigure}{.5\columnwidth}
  \centering
  \includegraphics[width=.9\linewidth]{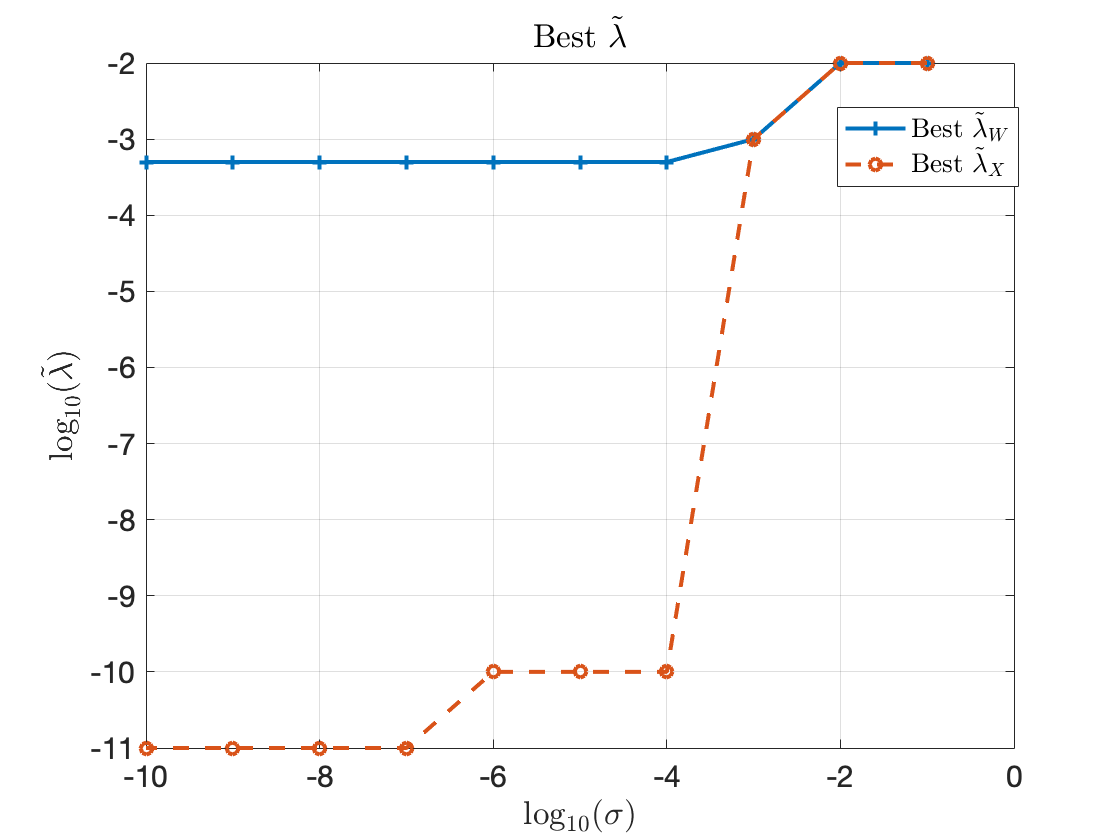}
  \caption{Best $\Tilde{\lambda}$}
  \label{fig:sfig2}
\end{subfigure}
    \caption{Results of noisy min-vol using ideally chosen $\lambda$ for different noise levels}
    \label{fig:minvol}
\end{figure}
We use the relative root-mean-squared errors between the true unobserved matrices and their estimates to evaluate the quality of a recovered NMF
\begin{eqnarray*}
    \texttt{rel-RMSE}(\M{X}) & = &\lVert\M{X}^{\ast
    }-\hat{\M{X}}\rVert_{\text{F}}/\lVert \M{X}^{\ast} \rVert_{\text{F}} \quad \text{and} \\
    \texttt{rel-RMSE}(\M{W}) & = &\lVert \M{W}^{\ast}-\hat{\M{W}} \rVert _{\text{F}}/\lVert\M{W}^{\ast}\rVert_\text{F},
\end{eqnarray*}
where $(\hat{\M{W}},\hat{\M{H}})$ are estimates of the underlying factor matrices and $\hat{\M{X}} = \hat{\M{W}}\hat{\M{H}}$. 

To investigate the relationship between an ideally chosen $\lambda$ and $\sigma$, we add different amounts of noise by varying $\sigma$ from $10^{-10}$ to $10^{-1}$. In each experiment, we run the algorithms with $\Tilde{\lambda}$ in \Eqn{lambda_original} varying from $1.5$ to $10^{-11}$. Then, we record the best results in that experiment, i.e., the smallest $ \texttt{rel-RMSE}(\M{X})$, the smallest $ \texttt{rel-RMSE}(\M{W})$, the best $\Tilde{\lambda}$ for $ \texttt{rel-RMSE}(\M{X})$ ($\Tilde{\lambda}_X$), and the best $\Tilde{\lambda}$ for $ \texttt{rel-RMSE}(\M{W})$ ($\Tilde{\lambda}_W$).  

\Fig{sfig1} shows that as $\sigma$ decreases from $10^{-1}$ to $10^{-10}$, $ \texttt{rel-RMSE}(\M{W})$ rapidly decreases to $10^{-4}$ and $ \texttt{rel-RMSE}(\M{X})$ to below $10^{-8}$. \Fig{sfig2} reveals that $\Tilde{\lambda}_W$ initially decreases with $\sigma$ before plateauing at approximately $10^{-3}$. By contrast, $\Tilde{\lambda}_X$ decreases drastically, eventually converging to $10^{-11}$ for very small values of $\sigma$.  In summary, the empirically optimal $\Tilde{\lambda}$ decreases as $\sigma$ decreases. Moreover, it is problematic that $\Tilde{\lambda}_W$ does not decrease to zero as $\sigma$ tends to zero. The issues revealed in this simulation motivate reformulating the noisy min-vol NMF problem to be less sensitive to the unknown noise level $\sigma$.



\section{Square-Root Min-Vol NMF}

We next detail our new formulation of the min-vol NMF problem.  We first propose an alternative data-fidelity term inspired by the square-root lasso. We will see that replacing the squared Frobenius norm with the Frobenius norm empirically eliminates the dependency of the optimal $\lambda$ value on the unknown noise variance $\sigma^2$. We then use a smooth approximation to the Frobenius norm which yields an optimization problem that we solve with an MM algorithm.


\subsection{The Square-Root Lasso} 

We briefly review the lasso and square-root lasso \cite{belloni2011square, Sun2012} to motivate how we pose our NMF problem. Consider a linear regression model 
\begin{eqnarray*}
    y_i & = & \V{x}_i\Tra\V{\beta}^\star + \sigma \epsilon_i \quad i = 1, \cdots, n,
\end{eqnarray*}
where the $\epsilon_i$'s are independent and identically distributed standard normal random variables. In many applications where $\V{\beta}^\star \in \Real^p$ is high-dimensional, we often assume that $\V{\beta}^\star$ is sparse in the sense that most of its entries are zero. The lasso \cite{tibshirani1996regression} estimates $\V{\beta}^\star$ by solving the convex optimization problem
\begin{eqnarray*}
    \underset{\V{\beta}}{\min}\; 
    \frac{1}{n} \lVert \V{y} - \M{X}\V{\beta} \rVert_2^2  
    + \frac{\lambda}{n}\lVert \V{\beta} \rVert_1,
\end{eqnarray*}
where $\lVert \V{\beta} \rVert_2$ and $\lVert \V{\beta} \rVert_1$ denote the 2-norm and 1-norm of $\V{\beta}$ respectively. The lasso estimator attains near optimal error in recovering $\V{\beta}^\star$ in a 2-norm sense by setting $\lambda$ to be proportional to the unkown value of $\sigma$.

By contrast, the square-root lasso estimates $\V{\beta}^\star$ by solving the convex optimization problem 
\begin{eqnarray*}
    \underset{\V{\beta}}{\min}\; 
    \sqrt{\frac{1}{n} \lVert \V{y} - \M{X}\V{\beta} \rVert_2^2} 
    + \frac{\lambda}{n}\lVert \V{\beta} \rVert_1.
\end{eqnarray*}
The square-root lasso also attains near optimal error in recovering $\V{\beta}^\star$ in a 2-norm sense but, remarkably, is able to achieve this with a value of $\lambda$ that is independent of the unknown value of $\sigma$.

Inspired by the square-root lasso, we propose a new optimization problem for min-vol NMF as
\begin{equation}
\begin{aligned}
\min_{\M{W}, \M{H}} \quad & f(\M{W},\M{H}) = \lVert \M{X}-\M{W}\M{H} \rVert_{\text{F}} + \lambda \text{vol}(\M{W})\\
\textrm{s.t.} \quad & (\M{W},\M{H}) \in \mathcal{S},
\end{aligned}
\label{squarerootprob}
\end{equation}
 where $\text{vol}(\M{W}) = \log \det(\M{W}\Tra \M{W} + \delta \M{I})$. The best penalty level $\lambda$ is expected to be independent from the variance of the noise level. We will see in our numerical studies that while some care is needed in choosing a $\lambda$ value, a very wide range of $\lambda$ values that are independent of the unknown noise level work equally well. 

One computational challenge with problem (\ref{squarerootprob}), however, is the non-differentiability of the square root function. 
Consequently, we employ a differentiable approximation of $f(\M{W}, \M{H})$ and solve the problem
\begin{equation}
\begin{aligned}
\min_{\M{W}, \M{H}} \quad & f_{\varepsilon}(\M{W},\M{H})= {\sqrt{\lVert \M{X}-\M{W}\M{H} \rVert_{\text{F}}^2 + \varepsilon}} + \lambda \text{vol}(\M{W}) \\
\textrm{s.t.} \quad & (\M{W},\M{H}) \in \mathcal{S},
\end{aligned}
\label{eq:fepsilon}
\end{equation}
where $\varepsilon$ is a small positive number. 
Subsequently, we will derive an MM method that can solve problem \Eqn{fepsilon} in the next subsections.

\subsection{The MM Principle}

The MM principle \cite{Lange2016} converts minimizing a challenging function $f(\V{\theta})$ into solving a sequence of simpler optimization problems. 
The idea is to approximate an objective function $f(\V{\theta})$ to be minimized by a surrogate function or majorization $g(\V{\theta} \mid \Vtilde{\theta})$ anchored at the current estimate $\Vtilde{\theta}$. The majorization  $g(\V{\theta} \mid \Vtilde{\theta})$ needs to satisfy two conditions: (i) a tangency condition $g(\Vtilde{\theta} \mid \Vtilde{\theta}) = f(\Vtilde{\theta})$ for all $\Vtilde{\theta}$ and (ii) a domination condition $g(\V{\theta} \mid \Vtilde{\theta}) \geq f(\Vtilde{\theta})$ for all $\V{\theta}$. The associated MM algorithm is defined by the iterates
\begin{eqnarray}
\label{eq:MM-iterate}
    \V{\theta}_{k+1} & = & \operatorname*{arg\,min}_{\V{\theta}}  g(\V{\theta} \mid \V{\theta}_{k}), ~~~ k=0, 1, \dots.
\end{eqnarray}
The tangency and domination conditions imply that
\begin{eqnarray*}
    f(\V{\theta}_{k+1}) ~~ \leq ~~ g(\V{\theta}_{k+1} \mid \V{\theta}_{k}) ~~  \leq ~~ g(\V{\theta}_{k} \mid \V{\theta}_{k}) ~~ = ~~ f(\V{\theta}_{k}).
\end{eqnarray*}
In other words, the sequence of objective function values of the MM iterates decreases monotonically. The key to successfully applying the MM principle is to construct a surrogate function that is easy to minimize.

\subsection{An MM algorithm for the approximate square-root min-vol NMF} We apply the MM principle to derive an algorithm for computing the solution to \Eqn{fepsilon}. We first derive a surrogate function that majorizes $f_\varepsilon(\M{W}, \M{H})$.

\begin{proposition}
\label{prop:majorization}
Let $\V{\theta} = (\M{W}, \M{H})$ and $\V{\theta}_k = (\M{W}_k, \M{H}_k)$. 
The following is a majorization of $f_\varepsilon(\V{\theta})$ at $\V{\theta}_k$.
\begin{equation}
    \label{eq:majorization}
    \begin{aligned}
        g(\V{\theta}\mid \V{\theta}_k) = &  \text{ }\sqrt{r_k} + \dfrac{1}{2\sqrt{r_k}}(\lVert \M{X}-\M{W}\M{H} \rVert_{\text{F}}^2+\varepsilon - r_k) \\
        & + \lambda\left[\log \det (\M{Q}_k) + \tr\left(\M{Q}_k^{-1}(\M{Q}-\M{Q}_k)\right)\right],
    \end{aligned}
\end{equation}
where $r_k = \lVert \M{X}-\M{W}_k \M{H}_k\rVert_{\text{F}}^2 + \varepsilon$, $\M{Q} = \M{W}\Tra \M{W}+\delta \M{I}$, and $\M{Q}_k = \M{W}_k\Tra \M{W}_k + \delta \M{I}$.
\end{proposition}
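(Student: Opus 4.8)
The plan is to majorize the two terms of $f_\varepsilon$ separately and then add the resulting surrogates, using the fact that a sum of majorizations is itself a majorization. Write $s(\V{\theta}) = \lVert \M{X}-\M{W}\M{H}\rVert_{\text{F}}^2 + \varepsilon$, so the data-fidelity term is $\sqrt{s(\V{\theta})}$ and the penalty is $\lambda\log\det(\M{Q})$ with $\M{Q}=\M{W}\Tra\M{W}+\delta\M{I}$. Both $\sqrt{\cdot}$ and $\log\det(\cdot)$ are concave, and the proposed surrogate is precisely the sum of their first-order (tangent) expansions at the current iterate; the whole argument rests on the tangent-line inequality for concave functions, which supplies domination, while tangency will be immediate by construction.

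First I would handle the square-root term. Since $\phi(t)=\sqrt{t}$ is concave on $(0,\infty)$ and differentiable with $\phi'(t)=1/(2\sqrt{t})$, the tangent-line inequality gives $\sqrt{t} \leq \sqrt{r_k} + \tfrac{1}{2\sqrt{r_k}}(t-r_k)$ for every $t \geq 0$, provided $r_k>0$; the latter holds because $\varepsilon>0$. Substituting $t=s(\V{\theta})$ and noting $r_k = s(\V{\theta}_k)$ shows that the first line of $g(\V{\theta}\mid\V{\theta}_k)$ majorizes $\sqrt{s(\V{\theta})}$, with equality when $s(\V{\theta})=r_k$, in particular at $\V{\theta}=\V{\theta}_k$.

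Next I would handle the log-determinant penalty. On the cone of symmetric positive-definite matrices, $\Psi(\M{Q})=\log\det(\M{Q})$ is concave with gradient $\nabla\Psi(\M{Q})=\M{Q}^{-1}$, so its tangent-line inequality reads $\log\det(\M{Q}) \leq \log\det(\M{Q}_k) + \tr\!\left(\M{Q}_k^{-1}(\M{Q}-\M{Q}_k)\right)$ for all positive-definite $\M{Q}$ and $\M{Q}_k$. Because $\delta>0$, both $\M{Q}=\M{W}\Tra\M{W}+\delta\M{I}$ and $\M{Q}_k=\M{W}_k\Tra\M{W}_k+\delta\M{I}$ are positive definite for every feasible $\M{W},\M{W}_k$, so the inequality applies verbatim; multiplying by $\lambda>0$ shows the second line of $g$ majorizes $\lambda\log\det(\M{Q})$, again with equality at $\M{Q}=\M{Q}_k$.

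Adding the two inequalities yields $g(\V{\theta}\mid\V{\theta}_k) \geq f_\varepsilon(\V{\theta})$ for all feasible $\V{\theta}$ (domination), and evaluating at $\V{\theta}_k$ collapses both surrogate terms to their exact values, giving $g(\V{\theta}_k\mid\V{\theta}_k) = \sqrt{r_k} + \lambda\log\det(\M{Q}_k) = f_\varepsilon(\V{\theta}_k)$ (tangency). I expect the only genuine subtlety to be recording the concavity of $\log\det$ together with the identity $\nabla\log\det(\M{Q})=\M{Q}^{-1}$ and verifying the positive-definiteness hypotheses that make this gradient well-defined; everything else is the routine tangent-line bound for a concave function applied term by term.
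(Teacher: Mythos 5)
Your proposal is correct and follows essentially the same route as the paper: majorize each concave piece ($\sqrt{\cdot}$ and $\log\det(\cdot)$) by its first-order tangent bound at the current iterate and sum the results. Your added checks that $r_k>0$ (from $\varepsilon>0$) and that $\M{Q},\M{Q}_k$ are positive definite (from $\delta>0$) are worthwhile details the paper leaves implicit.
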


\begin{proof}
Recall that the first-order Taylor approximation of a differentiable concave function $\varphi(x)$ at a point $\tilde{x}$ provides a tight global upper bound on $\varphi(x)$:
\begin{eqnarray*}
\varphi(x) & \leq & \varphi(\tilde{x}) + \varphi'(x)(x - \tilde{x})
\end{eqnarray*}
for all $x, \tilde{x} \in \dom\, \varphi$. In other words, the first-order Taylor approximation of $\varphi(x)$ at $\tilde{x}$ majorizes $\varphi(x)$ at $\tilde{x}$. 

Note that $u \mapsto \sqrt{u}$ is a differentiable concave function over the positive reals and $\M{Q} \mapsto \log \det {\M{Q}}$ is a differentiable concave function over the positive definite matrices. Consequently, we can obtain the following two majorizations of 
$u \mapsto \sqrt{u}$ and $\M{Q} \mapsto \log\det \M{Q}$ at $\Vtilde{u}$.
\begin{eqnarray}
    \sqrt{u} & \leq & \sqrt{\Tilde{u}} + \dfrac{1}{2\sqrt{\Tilde{u}}}(u-\Tilde{u})
    \label{eq:sqrt(u)}
\end{eqnarray}
and 
\begin{equation}
   \log \det(\M{Q}) \leq  \log \det(\Mtilde{Q}) + \tr\left(\Mtilde{Q}^{-1}(\M{Q}-\Mtilde{Q})\right).
    \label{eq:logdetQ}
\end{equation}
Plugging 
\begin{eqnarray*}
     u & = & \lVert \M{X} - \M{W}\M{H} \rVert_{\text{F}}^2+\varepsilon, \\
    \Tilde{u} & = & r_k,\\
    \Mtilde{Q} & = & \M{Q}_k,
\end{eqnarray*}
into \Eqn{sqrt(u)} and \Eqn{logdetQ} completes the proof. 

\end{proof}
Minimizing the majorization \Eqn{majorization} gives us the following MM-update
\begin{eqnarray}
    \V{\theta}_{k+1} 
    & = & \underset{(\M{W},\M{H}) \in \mathcal{S}}{\arg\min} g(\M{W},\M{H} \mid \M{W}_k,\M{H}_k) \\
    & = & \underset{(\M{W},\M{H}) \in \mathcal{S}}{\arg\min} \lVert\M{X} - \M{W}\M{H}\rVert_{\text{F}}^2 + \lambda_k \tr\left((\M{Q}_k^{-1})\M{W}\Tra\M{W}\right), 
    \label{algomap}
\end{eqnarray}

where $\lambda_k = 2\sqrt{r_k} \lambda$. 
\\

The majorization in (\ref{algomap}) is identical in form to the majorization employed in Leplat et al.\@ \cite{Leplat2019}. The objective function $g(\M{W}, \M{H} \mid \M{W}_k, \M{H}_k)$ is biconvex, i.e., convex in $\M{W}$ with $\M{H}$ fixed and convex in $\M{H}$ with $\M{W}$ fixed. 
Therefore, we use the min-vol NMF algorithm in \cite{Leplat2019} to compute the MM-update (\ref{algomap}). \Alg{cap} summarizes our new NMF procedure.
\begin{algorithm}[H]
\caption{Square-Root Min-Vol NMF}\label{alg:cap}
 \hspace*{\algorithmicindent} \textbf{Input}: $\M{X} \in \mathbb{R}_+^{m \times n}$, target rank $r$, $\lambda_1$, $\delta$, $\varepsilon$. \\
 \hspace*{\algorithmicindent} \textbf{Output}: $\M{W} \in \mathbb{R}_+^{m \times r},\M{H} \in \mathbb{R}_+^{r \times n}$ in $\mathcal{S}$ 
\begin{algorithmic}[1]
\State ($\M{W}_1$,$\M{H}_1$) = \textbf{SNPA}($\M{X}$,$r$)
\For{$k=1, \ldots$}
    \State ($\M{W}_{k+1}$,$\M{H}_{k+1}$) = \textbf{MinVol}($\M{X}$,$r$,[$\M{W}_k$,$\M{H}_k$,$\lambda_{k}$,$\delta$]) 
    \State $\lambda_{k+1} \gets (2\sqrt{\lVert\M{X} - \M{W}\M{H}\rVert_{\text{F}}^2 + \varepsilon} ) \lambda_k$
\EndFor
\end{algorithmic}
\end{algorithm}

Note that \textbf{MinVol} is the min-vol NMF algorithm \cite{Leplat2019}. Overall, \Alg{cap} can deal with the problem of selecting the optimal tuning parameter $\lambda$ with respect to noise level better than original min-vol algorithm. Moreover, it comes with the following global convergence guarantee.
\begin{proposition}
 \label{prop:convergence}
    The limit points of the iterate sequence produced by \Alg{cap} are first order stationary points of \Eqn{fepsilon}.
\end{proposition}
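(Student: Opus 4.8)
The plan is to establish the result with the standard convergence argument for majorization--minimization (MM) / successive upper-bound minimization schemes, in the style of Zangwill's global convergence theorem and its refinements, treating \Alg{cap} as the MM algorithm whose surrogate is the majorization $g(\V{\theta}\mid\V{\theta}_k)$ of \Prop{majorization}. First I would record the structural facts that make the argument go through. Because $\varepsilon>0$ and $\delta>0$, the map $\V{\theta}\mapsto f_\varepsilon(\V{\theta})$ is continuously differentiable on the convex constraint set $\mathcal{S}$: the argument of the square root is bounded below by $\varepsilon$, and $\M{W}\Tra\M{W}+\delta\M{I}\succeq\delta\M{I}\succ\M{0}$, so both the square-root term and the $\log\det$ term are smooth. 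Since $\mathcal{S}$ is convex and $f_\varepsilon$ is $C^1$, first-order stationarity at a point $\V{\theta}^\star$ means the variational inequality $\langle\nabla f_\varepsilon(\V{\theta}^\star),\V{\theta}-\V{\theta}^\star\rangle\ge 0$ for all $\V{\theta}\in\mathcal{S}$, and this is what I would ultimately verify at every limit point. I would also note that $f_\varepsilon$ is bounded below on $\mathcal{S}$---the square-root term is at least $\sqrt{\varepsilon}$ and $\log\det(\M{W}\Tra\M{W}+\delta\M{I})\ge r\log\delta$---and that it has compact sublevel sets ($\M{H}$ is bounded by the column-sum constraints, and the penalty is coercive in $\M{W}$), so the iterates stay in a compact set and limit points exist.

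Next I would verify the three properties the argument needs beyond the tangency and domination already supplied by \Prop{majorization}. (i) Joint continuity of $(\V{\theta},\Vtilde{\theta})\mapsto g(\V{\theta}\mid\Vtilde{\theta})$ on $\mathcal{S}\times\mathcal{S}$: this again uses $\varepsilon>0$ and $\delta>0$, which keep $r_k$ bounded away from $0$ and $\M{Q}_k$ uniformly positive definite, so $\sqrt{r_k}$, $1/\sqrt{r_k}$, $\M{Q}_k^{-1}$, and $\log\det\M{Q}_k$ are all continuous. (ii) First-order tangency, $\nabla_{\V{\theta}}\,g(\V{\theta}\mid\V{\theta}_k)\big|_{\V{\theta}=\V{\theta}_k}=\nabla f_\varepsilon(\V{\theta}_k)$. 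This is the crucial ingredient, and it is immediate from the construction: each majorizing piece in \Eqn{majorization} is the first-order Taylor expansion of a concave function ($u\mapsto\sqrt{u}$ and $\M{Q}\mapsto\log\det\M{Q}$) at the anchor, so it agrees with the majorized function in both value and gradient there, and the chain rule transfers the gradient match through $u(\V{\theta})=\lVert\M{X}-\M{W}\M{H}\rVert_{\text{F}}^2+\varepsilon$ and $\M{Q}(\M{W})=\M{W}\Tra\M{W}+\delta\M{I}$ to the composite $g$. (iii) Monotonicity: tangency and domination give $f_\varepsilon(\V{\theta}_{k+1})\le g(\V{\theta}_{k+1}\mid\V{\theta}_k)\le g(\V{\theta}_k\mid\V{\theta}_k)=f_\varepsilon(\V{\theta}_k)$, so $\{f_\varepsilon(\V{\theta}_k)\}$ decreases and, being bounded below, converges.

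With these in hand I would run the limit-point argument. Let $\V{\theta}_\infty$ be a limit point with $\V{\theta}_{k_j}\to\V{\theta}_\infty$. For every feasible $\V{\theta}\in\mathcal{S}$ the update satisfies $f_\varepsilon(\V{\theta}_{k_j+1})\le g(\V{\theta}_{k_j+1}\mid\V{\theta}_{k_j})\le g(\V{\theta}\mid\V{\theta}_{k_j})$. Letting $j\to\infty$ and using that $f_\varepsilon(\V{\theta}_{k_j+1})\to f_\varepsilon(\V{\theta}_\infty)$ (the function values converge to a single limit and $f_\varepsilon$ is continuous), together with joint continuity of $g$, I obtain $f_\varepsilon(\V{\theta}_\infty)\le g(\V{\theta}\mid\V{\theta}_\infty)$ for all $\V{\theta}\in\mathcal{S}$. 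Since $g(\V{\theta}_\infty\mid\V{\theta}_\infty)=f_\varepsilon(\V{\theta}_\infty)$ by tangency, $\V{\theta}_\infty$ is a global minimizer of the surrogate $g(\cdot\mid\V{\theta}_\infty)$ over the convex set $\mathcal{S}$. The first-order optimality condition for this minimization reads $\langle\nabla_{\V{\theta}}g(\V{\theta}_\infty\mid\V{\theta}_\infty),\V{\theta}-\V{\theta}_\infty\rangle\ge 0$ for all $\V{\theta}\in\mathcal{S}$, and substituting the first-order tangency identity from step (ii) turns this into $\langle\nabla f_\varepsilon(\V{\theta}_\infty),\V{\theta}-\V{\theta}_\infty\rangle\ge 0$, which is exactly first-order stationarity of \Eqn{fepsilon} at $\V{\theta}_\infty$.

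The main obstacle is the step identifying $\V{\theta}_{k+1}$ as a \emph{global} minimizer of $g(\cdot\mid\V{\theta}_k)$ over $\mathcal{S}$. The surrogate in \Eqn{majorization} is only biconvex in $(\M{W},\M{H})$ because of the coupled term $\lVert\M{X}-\M{W}\M{H}\rVert_{\text{F}}^2$, and the inner \textbf{MinVol} routine minimizes it by block coordinate descent, which in general returns only a blockwise (partial) minimizer rather than a joint global one, so the chain of inequalities above need not close. To make the argument rigorous I would either (a) assume the inner surrogate problem is solved to global optimality, or, more satisfactorily, (b) recast \Alg{cap} as a block successive upper-bound minimization scheme in which $\M{W}$ and $\M{H}$ are updated by exactly solving their respective convex subproblems, and invoke the corresponding convergence theorem under a standard regularity condition (e.g. uniqueness of each blockwise minimizer), so that blockwise stationarity of the surrogate upgrades to genuine stationarity of $f_\varepsilon$. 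Handling this coupling carefully is the crux of the proof; the remaining steps are routine given the smoothness afforded by $\varepsilon,\delta>0$.
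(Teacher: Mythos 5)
Your overall strategy is the standard first-order convergence argument for MM schemes, and since the paper relegates its own proof to a supplement this is almost certainly the same route it takes: smoothness of $f_\varepsilon$ guaranteed by $\varepsilon,\delta>0$, boundedness of the iterates, monotone descent, joint continuity plus first-order tangency of the surrogate from \Prop{majorization}, and passage to the limit along a convergent subsequence to obtain the variational inequality characterizing stationarity. All of those ingredients are set up correctly in your write-up, and your observation that the gradient match at the anchor comes for free because the majorization holds on an open neighborhood of $\V{\theta}_k$ is exactly right.

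The genuine gap is the one you flag yourself, and it is not cosmetic: the inequality $g(\V{\theta}_{k_j+1}\mid\V{\theta}_{k_j})\le g(\V{\theta}\mid\V{\theta}_{k_j})$ for \emph{all} $\V{\theta}\in\mathcal{S}$ requires $\V{\theta}_{k_j+1}$ to be a \emph{global} minimizer of the surrogate, whereas \Alg{cap} obtains $\V{\theta}_{k+1}$ from a call to the \textbf{MinVol} block-coordinate routine of Leplat et al., which at best returns a blockwise (Nash) point of the biconvex surrogate, and in practice only an approximate one after finitely many inner passes. Your proposed repair (b) --- recasting the scheme as block successive upper-bound minimization and invoking the corresponding convergence theorem --- is the right idea, but you stop at naming it; to close the argument you must verify that theorem's hypotheses for \emph{this} problem. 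The $\M{W}$-block is unproblematic: the term $\tr\bigl(\M{Q}_k^{-1}\M{W}\Tra\M{W}\bigr)$ with $\M{Q}_k^{-1}\succ\M{0}$ makes that subproblem strongly convex, hence uniquely solvable. The $\M{H}$-block is where care is needed: $\M{H}\mapsto\lVert\M{X}-\M{W}\M{H}\rVert_{\text{F}}^2$ is convex but not strictly convex whenever $\M{W}$ is rank-deficient --- precisely the regime the $\delta$-regularization is designed to accommodate --- so the per-block uniqueness hypothesis can fail, and one must either appeal to the quasi-convexity variant of the block-MM convergence theorem or supply a separate argument for why non-uniqueness of the $\M{H}$-update does not destroy stationarity of the limit points. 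Until one of those steps is carried out, what you have is a correct and well-aimed plan rather than a complete proof.
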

The proof is given in the supplement.


\section{Numerical experiments}
In this section, we investigate the empirical performance and behavior of the square-root min-vol NMF. 
We conduct three experiments. In the first experiment, we revisit the experiment from Section~2 to see that a single $\lambda$ value can recover, up to the noise level, the underlying factor matrices over a dynamic range of noise levels. This is in contrast to the behavior seen with the Leplat et al.\@ min-vol NMF formulation in Figure~1. In the second experiment, we further explore the sensitivity of the choice of $\lambda$ as a function of noise. In the third experiment, we take a closer look at the behavior of the parameter $\lambda_k$ in Algorithm~1 to get insight and intuition into why the square-root min-vol NMF formulation exhibits tuning-free behavior. For all experiments, we use an approximation parameter $\varepsilon = 0.1$.

\subsection{Success with a single tuning parameter under a range of noise levels}

The simulation setup is identical to the one given in Section~2. We seek to recover a factor matrix $\M{W}^\star \in \Real^{4 \times 4}$ and $\M{H}^\star \in \Real_+^{4 \times 500}$ in the presence of additive i.i.d.\@ Uniform$[0, \sigma]$ noise. We simulate instances of the problem with noise level $\sigma$ ranging from $10^{-10}$ to $10^{-1}$ on a log-linear scale.

Recall that the best performing tuning parameter values in standard noisy min-vol NMF were \emph{proportional with the noise level} (See \Fig{sfig2}). By contrast, for square-root min-vol NMF we set $\lambda = 1$ in \Eqn{fepsilon} in all problem realizations, i.e., \emph{we employ a single value of $\lambda$ for all noise levels.}

\Fig{minvol_MMminvol} shows that the errors of the factorization recovered by the square-root min-vol NMF tend towards zero as $\sigma$ tends to zero. Again this was accomplished without any tuning since $\lambda = 1$ for all $\sigma$ between $10^{-10}$ and $10^{-1}$.

\begin{figure}[H]
    \centering
  \includegraphics[width=.6\linewidth]{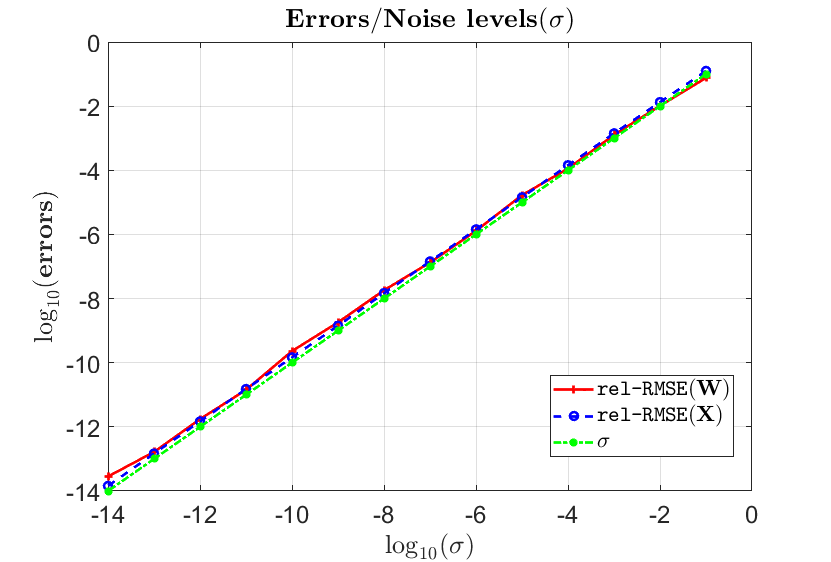}

    \caption{Relative errors of $\M{W}^\star$ and $\M{X}^\star$ recovery under varying noise levels $\sigma$ by square-root min-vol NMF using a single tuning parameter value $\lambda = 1$.}
    \label{fig:minvol_MMminvol}
\end{figure}

\subsection{Success with a wide range of tuning parameters}

In the previous experiment, we saw that a single value of the tuning parameter could be successful over a wide range of noise levels. A natural follow up question is whether the range of $\lambda$ that can lead to successful recovery is wide or narrow.  In the second experiment, we seek to answer this question by applying the square-root min-vol NMF with different $\lambda$ over a range of noise levels. We use the same simulation setup as in Section~4.1 but consider a more focused range of $\sigma \in \small{\lbrace 0, 0.1, 0.01, 0.001, 0.0001 \rbrace}$. We evaluate the recovery error using 
\begin{eqnarray*}
    \lambda & \in & \text{$\lbrace 2,1.5,1,0.8,0.5,0.1,0.05,0.01,0.005,0.001,0.0005,0.0001 \rbrace.$}
\end{eqnarray*}

To take a closer look at recovery performance, we employ principle component analysis  (PCA) to visualize the columns of $\M{X}$, $\M{W}^{\ast}$, and $\hat{\M{W}}$. 

Figures~\ref{fig:4_4_0}, \ref{fig:4_4_0-0001}, \ref{fig:4_4_0-001}, \ref{fig:4_4_0-01}, and \ref{fig:4_4_0-1} show the relative error in recovering $\M{W}^\star$ as a function of the iterations under different noise level (panel a), relative error in recovering $\M{X}$ (panel b) as a function of the iterations under different noise levels, and the PCA visualization of the columns of $\M{X}$ and estimated columns of $\M{W}^\star$ by square-root min-vol. The common trend in all plots is that for sufficiently small $\lambda$, the relative error is on the order of $\sigma$.  When $\lambda$ is too large, the recovery fails (panels a and b). Nonetheless, we see that while the efficacy of square-root min-vol depends on the choice of $\lambda$, there is a relatively wide range of $\lambda$ that leads to estimates that have relative error up to the order of $\sigma$.
\begin{figure}[H]
    \centering
    \begin{subfigure}{.33\textwidth}
  \centering
  \includegraphics[width=.95\linewidth]{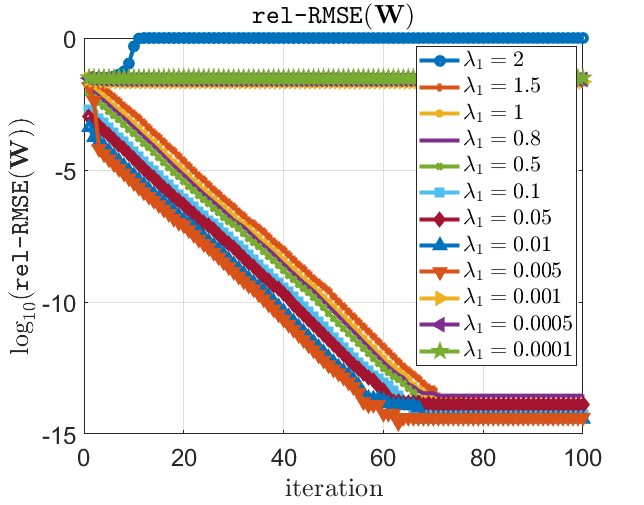}
  \caption{$ \texttt{rel-RMSE}(\M{W})$ for different $\lambda$}
  \label{fig:4_4_0_fig1}
\end{subfigure}%
\begin{subfigure}{.33\linewidth}
  \centering
  \includegraphics[width=.95\linewidth]{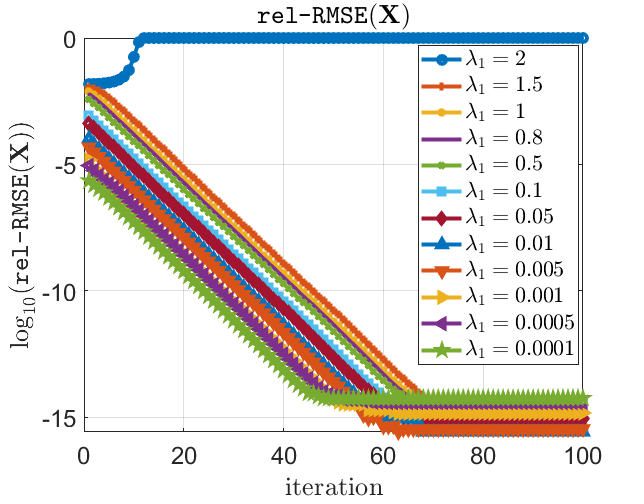}
  \caption{$ \texttt{rel-RMSE}(\M{X})$ for different $\lambda$}
  \label{fig:4_4_0_fig2}
\end{subfigure}%
\begin{subfigure}{.33\linewidth}
  \centering
  \includegraphics[width=.95\linewidth]{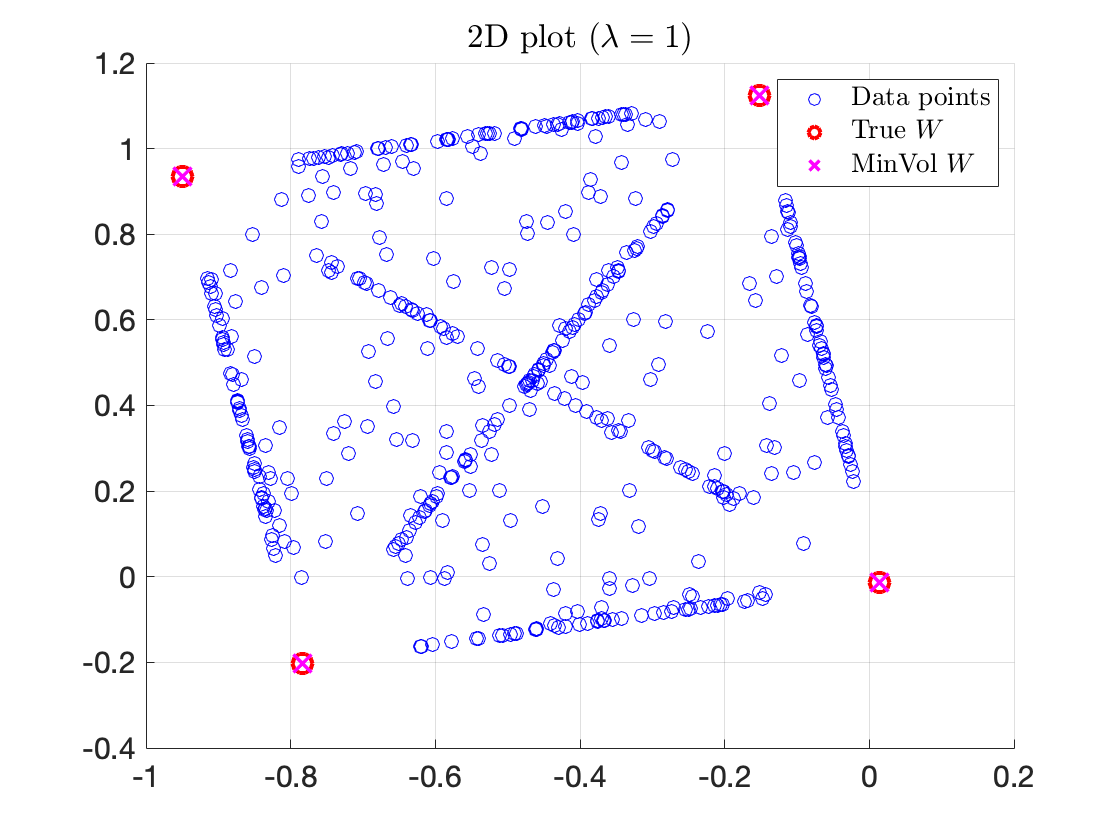}
  \caption{2D plot using PCA}
  \label{fig:4_4_0_fig3}
\end{subfigure}
    \caption{Results of square-root min-vol NMF with respect to different $\lambda$ in noiseless case}
    \label{fig:4_4_0}
\end{figure}

\begin{figure}[H]
    \centering
    \begin{subfigure}{.33\textwidth}
  \centering
  \includegraphics[width=.95\linewidth]{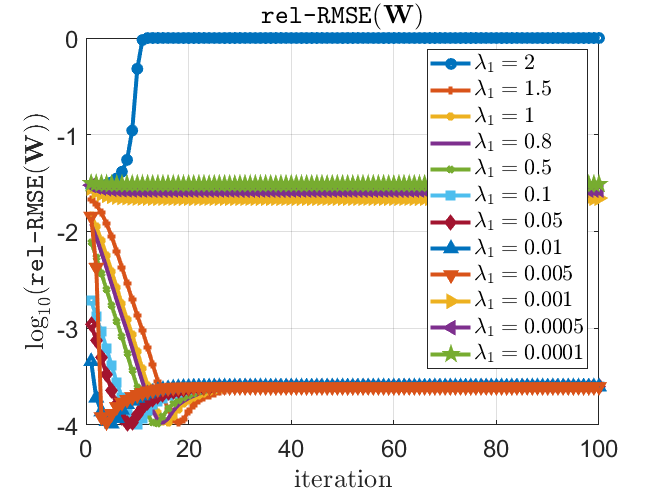}
  \caption{$ \texttt{rel-RMSE}(\M{W})$ for different $\lambda$}
  \label{fig:4_4_0-0001_fig1}
\end{subfigure}%
\begin{subfigure}{.33\textwidth}
  \centering
  \includegraphics[width=.95\linewidth]{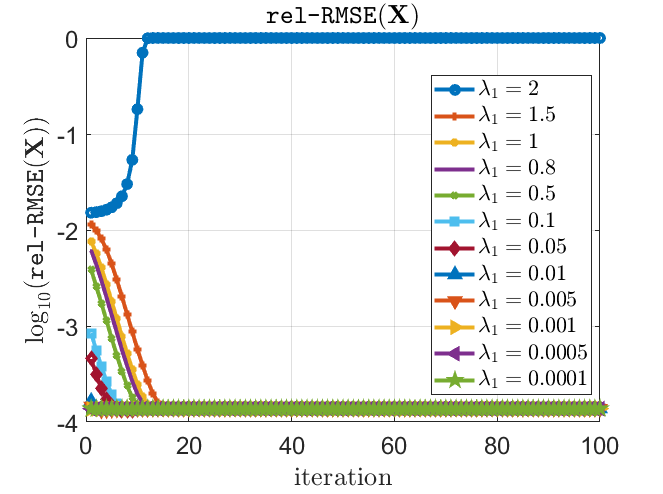}
  \caption{$ \texttt{rel-RMSE}(\M{X})$ for different $\lambda$}
  \label{fig:4_4_0-0001_fig2}
\end{subfigure}%
\begin{subfigure}{.33\textwidth}
  \centering
  \includegraphics[width=.95\linewidth]{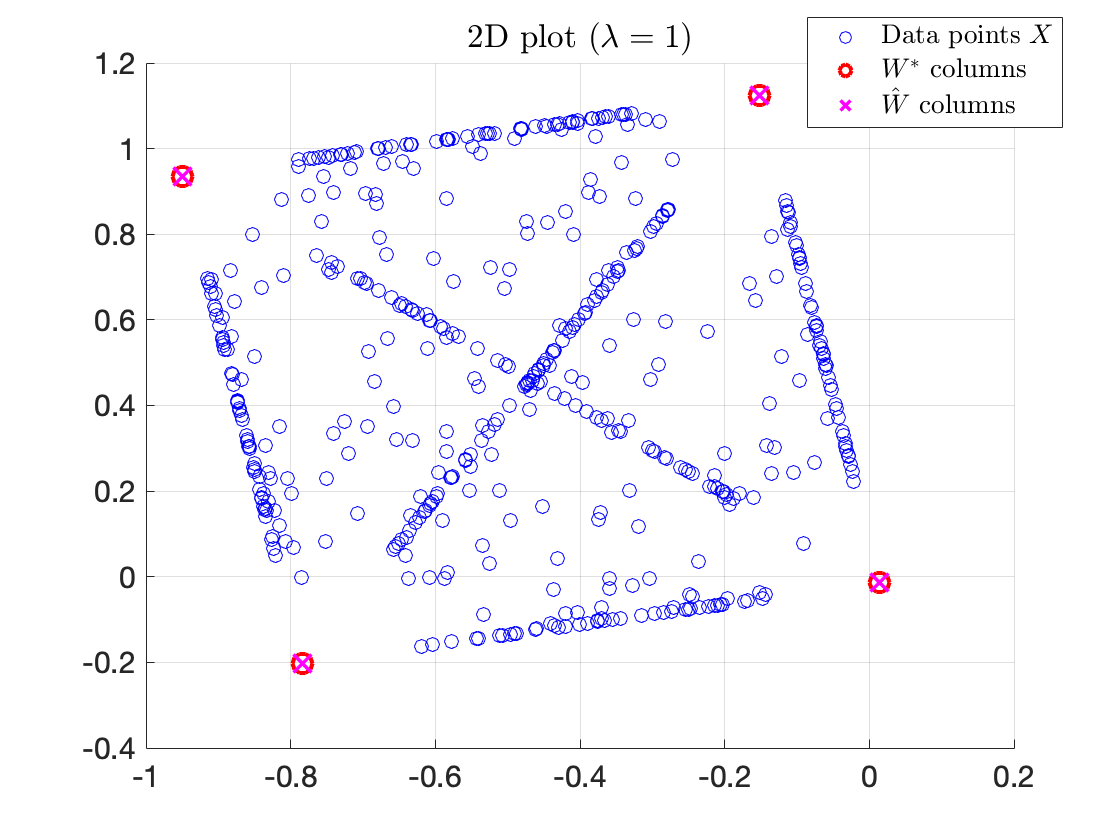}
  \caption{2D plot using PCA}
  \label{fig:4_4_0-0001_fig3}
\end{subfigure}
    \caption{Results of square-root min-vol NMF with respect to different $\lambda$ when $\sigma = 0.0001$}
    \label{fig:4_4_0-0001}
\end{figure}

\begin{figure}[H]
    \centering
    \begin{subfigure}{.33\textwidth}
  \centering
  \includegraphics[width=.95\linewidth]{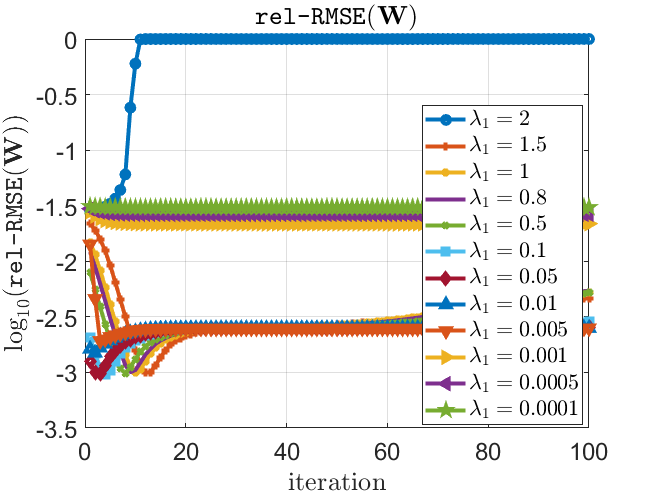}
  \caption{$ \texttt{rel-RMSE}(\M{W})$ for different $\lambda$}
  \label{fig:4_4_0-001_fig1}
\end{subfigure}%
\begin{subfigure}{.33\textwidth}
  \centering
  \includegraphics[width=.95\linewidth]{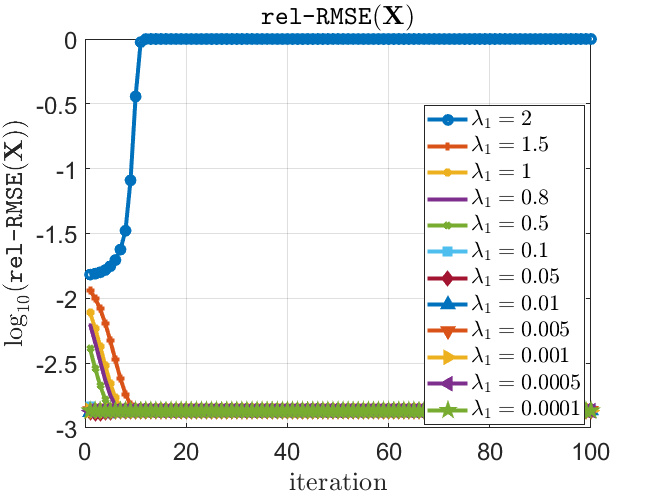}
  \caption{$ \texttt{rel-RMSE}(\M{X})$ for different $\lambda$}
  \label{fig:4_4_0-001_fig2}
\end{subfigure}%
\begin{subfigure}{.33\textwidth}
  \centering
  \includegraphics[width=.95\linewidth]{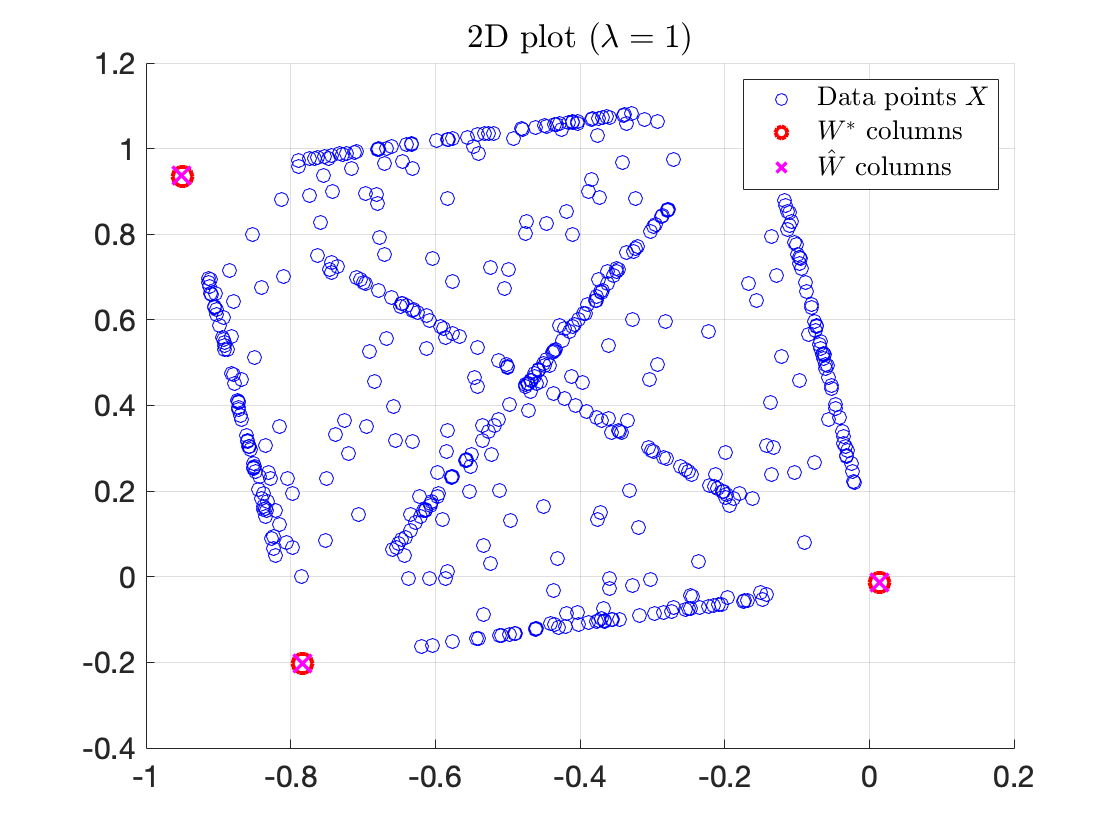}
  \caption{2D plot using PCA}
  \label{fig:4_4_0-001_fig3}
\end{subfigure}
    \caption{Results of square-root min-vol NMF with respect to different $\lambda$ when $\sigma = 0.001$}
    \label{fig:4_4_0-001}
\end{figure}

\begin{figure}[H]
    \centering
    \begin{subfigure}{.33\textwidth}
  \centering
  \includegraphics[width=\linewidth]{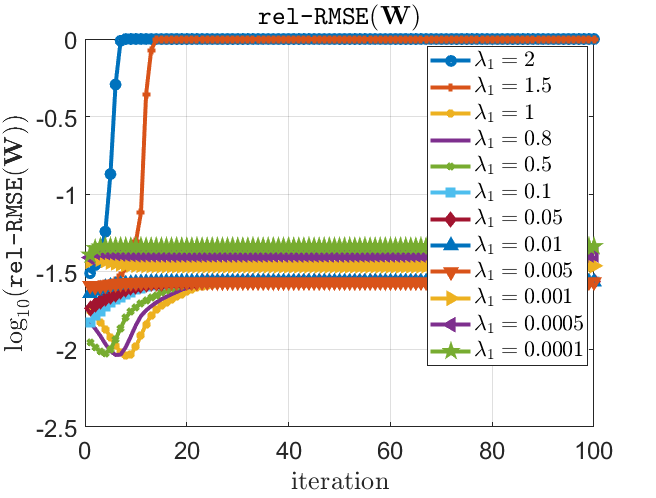}
  \caption{$ \texttt{rel-RMSE}(\M{W})$ for different $\lambda$}
  \label{fig:4_4_0-01_fig1}
\end{subfigure}%
\begin{subfigure}{.33\textwidth}
  \centering
  \includegraphics[width=\linewidth]{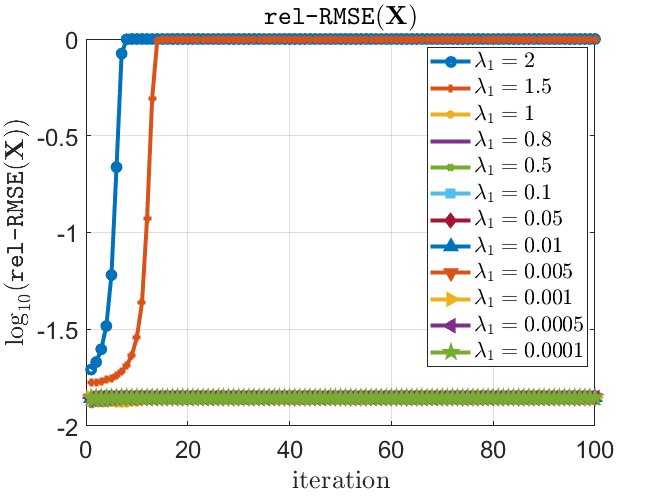}
  \caption{$ \texttt{rel-RMSE}(\M{X})$ for different $\lambda$}
  \label{fig:4_4_0-0001_fig2}
\end{subfigure}%
\begin{subfigure}{.33\textwidth}
  \centering
  \includegraphics[width=\linewidth]{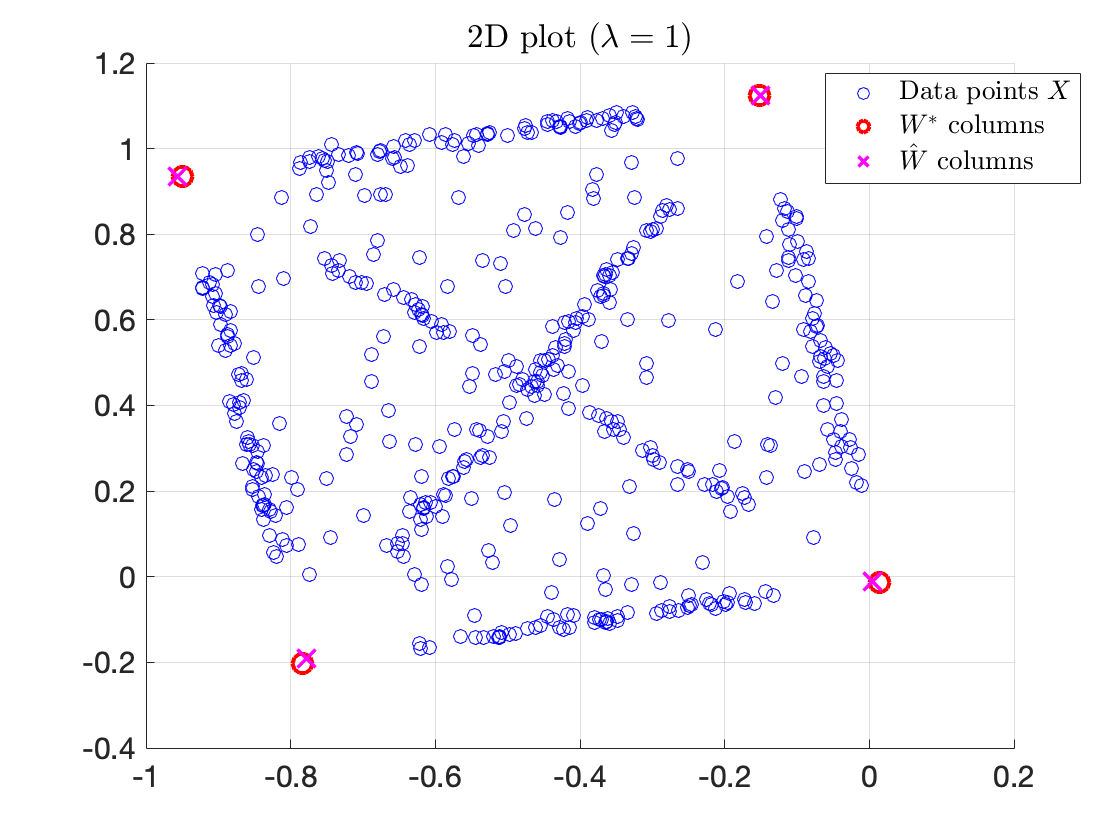}
  \caption{2D plot using PCA}
  \label{fig:4_4_0-01_fig3}
\end{subfigure}
    \caption{Results of square-root min-vol NMF with respect to different $\lambda$ when $\sigma = 0.01$}
    \label{fig:4_4_0-01}
\end{figure}

\begin{figure}[H]
    \centering
    \begin{subfigure}{.33\textwidth}
  \centering
  \includegraphics[width=.95\linewidth]{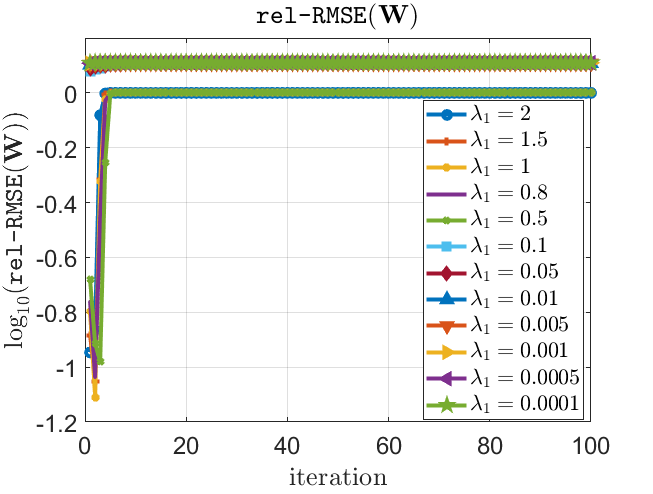}
  \caption{$ \texttt{rel-RMSE}(\M{W})$ for different $\lambda$}
  \label{fig:4_4_0-1_fig1}
\end{subfigure}%
\begin{subfigure}{.33\textwidth}
  \centering
  \includegraphics[width=.95\linewidth]{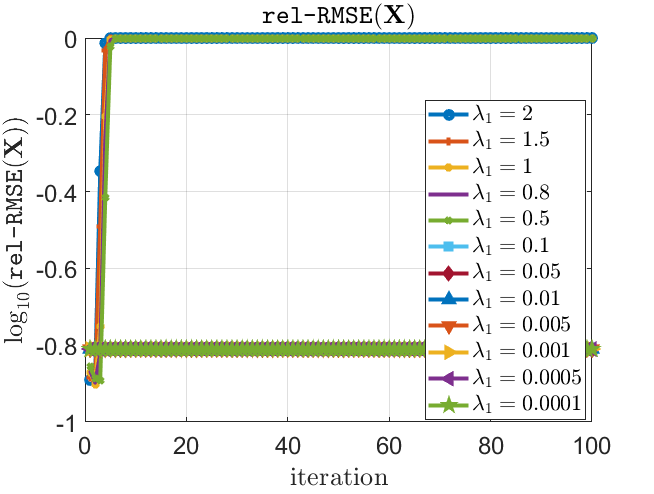}
  \caption{$ \texttt{rel-RMSE}(\M{X})$ for different $\lambda$}
  \label{fig:4_4_0-1_fig2}
\end{subfigure}%
\begin{subfigure}{.33\textwidth}
  \centering
  \includegraphics[width=.95\linewidth]{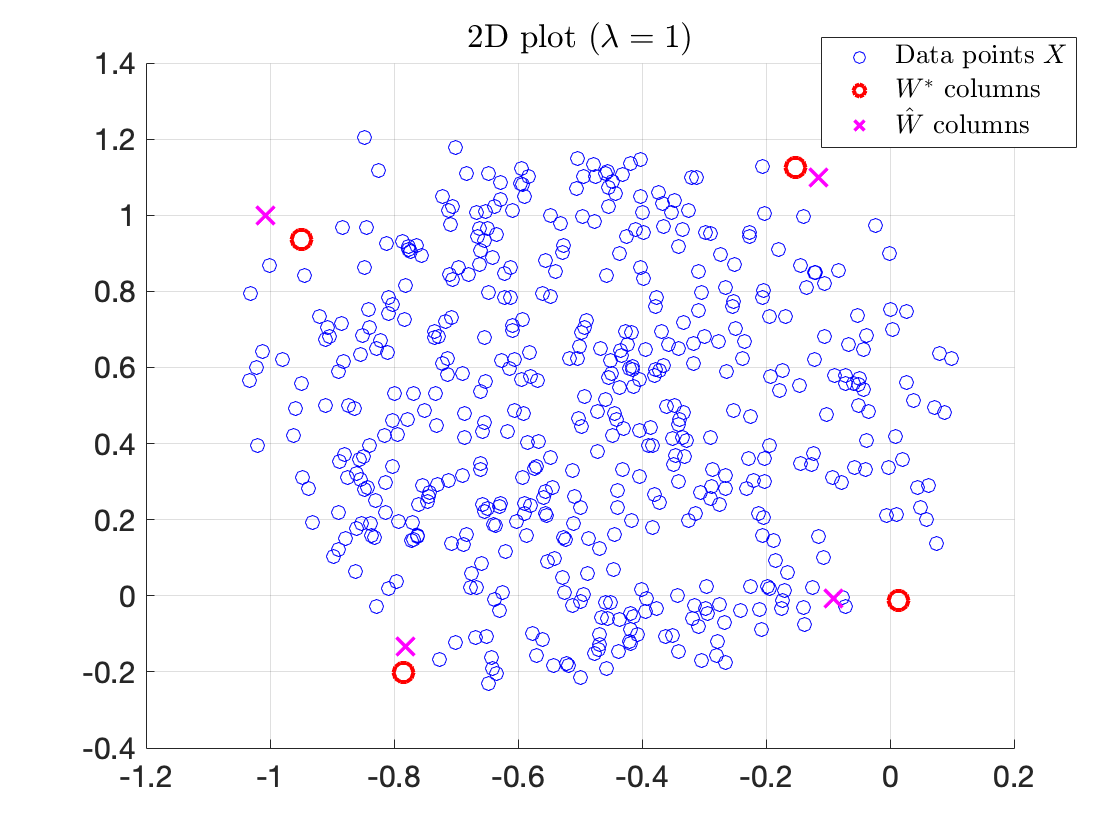}
  \caption{2D plot using PCA}
  \label{fig:4_4_0-1_fig3}
\end{subfigure}
    \caption{Results of square-root min-vol NMF with respect to different $\lambda$ when $\sigma = 0.1$}
    \label{fig:4_4_0-1}
\end{figure}


Moreover, the PCA plots (Figures~\ref{fig:4_4_0}-\ref{fig:4_4_0-1}, panel c) provide visual confirmation that the recovery of the columns of $\M{W}^\star$ is on the order of the noise level $\sigma$. The columns of $\hat{\M{W}}$ coincide exactly with the columns of $\M{W}^{\ast}$ when $\sigma = 0$. For $\sigma = 0.0001$ and $0.001$, the columns of $\M{W}^{\ast}$ are also visually indistinguishable from the columns of $\M{W}^\star$. Only when $\sigma = 0.01$ or $0.1$ can we visually discern the impact of the noise level. In those cases, the columns of $\hat{\M{W}}$ appear to be within a radius $\sigma$ of the columns of $\M{W}^{\ast}$. 

\subsection{Intuition for why the square-root min-vol NMF formulation exhibits tuning-free behavior}
In the third experiment, we track the per-iteration behavior of the MM subproblem to gain insight into why we observe this beneficial tuning-free behavior. We consider a larger and more realistic version of the simulation setup in Sections~4.1 and 4.2. We generate a random $25 \times 20$ matrix $\M{W}^{\ast}$; each entry is i.i.d.\@ Uniform[0, 1], and  $\M{H}^{\ast}$ is a $20 \times 10000$ matrix. We do not add noise to $\M{X}^\star$, and we use  $\lambda = 0.8$. 

\begin{figure}[H]
    \centering
    \begin{subfigure}{.45\linewidth}
  \centering
  \includegraphics[width=.95\textwidth]{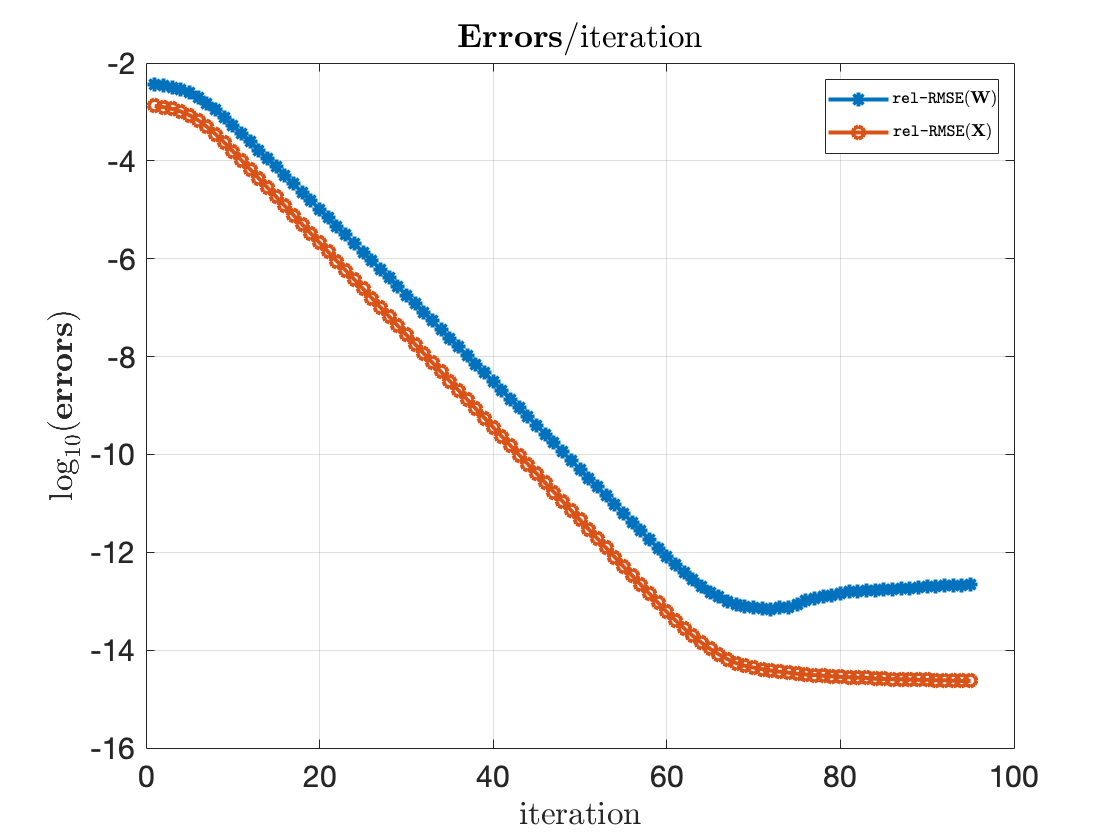}
  \caption{$ \texttt{rel-RMSE}(\M{W})$ and $ \texttt{rel-RMSE}(\M{X})$}
  \label{fig:25_20_0-1_fig1}
\end{subfigure}%
\begin{subfigure}{.45\linewidth}
  \centering
  \includegraphics[width=.95\textwidth]{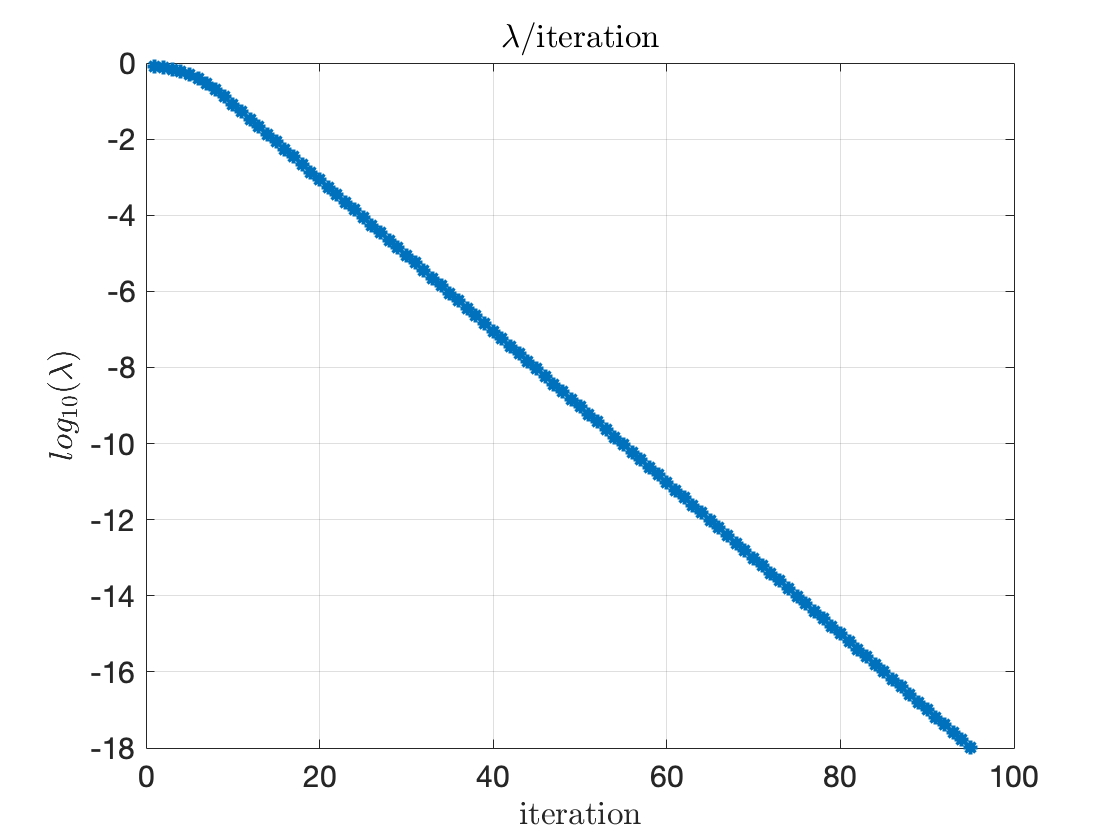}
  \caption{$\lambda_k$ vs iteration}
  \label{fig:25_20_0-1_fig2}
\end{subfigure}%
\newline
\begin{subfigure}{.5\linewidth}
  \centering
  \includegraphics[width=.95\textwidth]{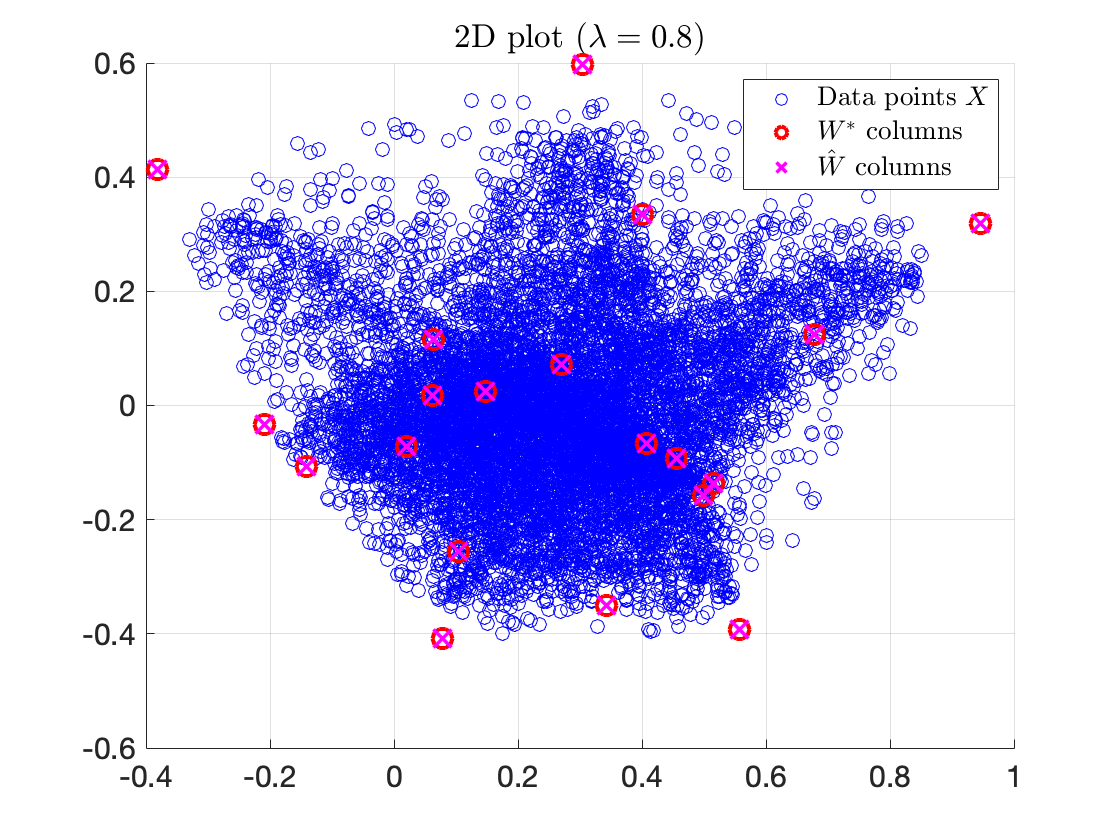}
  \caption{2D plot using PCA}
  \label{fig:25_20_0_fig3}
\end{subfigure}
    \caption{Results of square-root min-vol NMF with larger noiseless dataset}
    \label{fig:25_20_0}
\end{figure}

Figure~\ref{fig:25_20_0-1_fig1} shows that both $ \texttt{rel-RMSE}(\M{X})$ and $ \texttt{rel-RMSE}(\M{W})$ decrease to machine precision as Algorithm~1 proceeds. The 2D PCA visualization in Figure \ref{fig:25_20_0_fig3} confirms perfect recovery of the columns $\M{W}^{\ast}$ by the columns of $\Mhat{W}$.

The behavior of $\lambda_k$ as a function of the iteration count $k$ reveals an interesting insight. \Fig{4_4_0-0001} shows that $\lambda_k$ decreases rapidly to zero as $k$ increases. The key to understanding this behavior is to recognize that $\sqrt{r_k} = \sqrt{\lVert \M{X} - \M{W}_k\M{H}_k\rVert_{\text{F}}^2+\varepsilon}$ can be interpreted as an estimate of the noise level (up to some scaling based on the size of $\M{X}$) using the $k$th MM-iterate ($\M{W}_k, \M{H}_k)$. Thus, $\lambda_k = 2\sqrt{r_k}\lambda$ is a scaled version of $\lambda$ where the scaling factor is proportional to an estimate of $\sigma$ based on the $k$th MM-iterate. The MM-update is iteratively solving the surrogate optimization problem
\begin{eqnarray*}
\underset{(\M{W},\M{H}) \in \mathcal{S}}{\min} \lVert\M{X} - \M{W}\M{H}\rVert_{\text{F}}^2 + 2mn\hat{\sigma}_k\lambda\log\det\left(\M{W}\Tra\M{W} + \delta \M{I}\right),
\end{eqnarray*}
where $\hat{\sigma}_k = \frac{1}{mn} \sqrt{\lVert \M{X} - \M{W}_k\M{H}_k\rVert_{\text{F}}^2 + \varepsilon}$.
Thus, the MM-updates can be interpreted as solving the original noisy min-vol NMF problem in \Eqn{gillis_mv} with a tuning parameter that has been scaled by the most recent estimate of the noise level. Moreover, since there is no noise in this experiment, we expect that the best choice of $\lambda$ if one were to use noisy min-vol NMF would be zero, which is indeed to what $\lambda_k$ adaptively tends.



\section{Conclusion}

In this paper, we proposed a tuning-free variation on the noisy min-vol NMF model introduced in \cite{Leplat2019} as well as an MM algorithm for fitting the model that comes with some global convergence guarantees. Our empirical studies show that for sufficiently small tuning parameter values, recovery errors on the order of the added noise can be achieved. As this work is still relatively preliminary, open questions for future research include the following.
\begin{itemize}
\item Under what conditions is the square-root min-vol NMF provably guaranteed to be tuning-free in a way similar to its inspiration, the square-root lasso?
\item In this paper, we used a smooth approximation $f_\varepsilon$ and employed an MM algorithm which repeatedly applied in an inner loop the algorithm in \cite{Leplat2019}. Can we design faster algorithms for solving the square-root min-vol NMF problem? 
\end{itemize}


\section{Acknowledgement}
This work was conducted as part of the 2023 REU STAT-DATASCI program hosted by the Department of Statistics at Rice University. This work was also partially funded by a grant from the National Institute of General Medical Sciences (R01GM135928: EC).


\bibliographystyle{plain}
\bibliography{RiceREU}

\vfill\null
\vfill\null
\vfill\null
\vfill\null
\vfill\null
\vfill\null


\end{document}